\theoremstyle{plain}
\theoremstyle{definition}
\theoremstyle{remark}
\newtheorem{observation}{Observation}[section]
\newcommand{\approach}{fitness monotonic execution}
\newcommand\N{\mathbb{N}}
\newcommand\R{\mathbb{R}}
\icmltitlerunning{Eliminating Meta Optimization Through Self-Referential Meta Learning}
\begin{document}

\twocolumn[
\icmltitle{Eliminating Meta Optimization Through Self-Referential Meta Learning}

\icmlsetsymbol{equal}{*}

\begin{icmlauthorlist}
\icmlauthor{Louis Kirsch}{idsia}
\icmlauthor{Jürgen Schmidhuber}{idsia,kaust}
\end{icmlauthorlist}

\icmlaffiliation{idsia}{The Swiss AI Lab IDSIA, USI, SUPSI}
\icmlaffiliation{kaust}{King Abdullah University of Science and Technology}

\icmlcorrespondingauthor{Louis Kirsch}{louis@idsia.ch}

\icmlkeywords{Machine Learning, ICML, Reinforcement Learning, Meta Learning, Self-Referential, Self-Improvement}

\vskip 0.3in
]

\printAffiliationsAndNotice{}  %

\begin{abstract}
Meta Learning automates the search for learning algorithms.
At the same time, it creates a dependency on human engineering on the meta-level, where meta learning algorithms need to be designed.
In this paper, we investigate self-referential meta learning systems that modify themselves without the need for explicit meta optimization.
We discuss the relationship of such systems to in-context and memory-based meta learning and show that self-referential neural networks require functionality to be reused in the form of parameter sharing.
Finally, we propose {\approach} (FME), a simple approach to avoid explicit meta optimization.
A neural network self-modifies to solve bandit and classic control tasks, improves its self-modifications, and learns how to learn, purely by assigning more computational resources to better performing solutions.
\end{abstract}

\section{Introduction}

Machine learning is the process of deriving models and behavior from data or environment interaction using human-engineered learning algorithms.
Meta learning takes this process to the meta-level:
Its goal is to derive the learning algorithms themselves automatically as well~\citep{schmidhuber1987evolutionary,hochreiter2001learning,wang2016learning,duan2016rl,finn2017model,flennerhag2019meta,kirsch2019improving,kirsch2020meta}.
Unfortunately, this usually creates a dependency on human engineering on the meta-level, where researchers now have to design meta learning algorithms.
In this paper, we investigate methods for neural self-referential meta learning~\citep{schmidhuber1993self,schmidhuber1997shifting}.
In particular, we seek a process of self-improvement that reduces our reliance on human engineering to the largest extent possible.

A central piece in the discussion of self-referential meta learning is the self-referential neural architecture~\citep{schmidhuber1993self}.
Such an architecture allows for modifying not just some memory to improve future behavior, but also all of its own neural weights.
This enables self-modification and self-improvement allowing the architecture to learn, meta-learn, meta-meta-learn, etc.
We point out that
(1) in order to construct systems that can change all their parameters (or variables more generally), parts of the computational graph need to be reused.
This is done in the form of parameter (variable) sharing.
(2) We discover that memory-based architectures (in-context learners) are capable of similar self-improvement.
There is a representational equivalence between neural networks with memory and self-referential architectures.
Despite this, we show that self-referential architectures are useful in the absence of meta optimization.

Finally, we propose {\approach} (FME), a simple approach to avoid explicit meta optimization.
Instead of proposing changes to the model or learning algorithm explicitly, all changes to the model are self-modifications and the resulting solutions are selected for execution more frequently the better their performance.
We empirically demonstrate FME with a neural network that self-modifies  to solve bandit and classic control tasks, improves its self-modifications, and learns how to learn.

\section{Background}

\subsection{Self-referential Architectures}
A key requirement for a self-improving system is that it can make self-modifications such that it can change its behavior and learning arbitrarily.
One previously suggested way of achieving this is to bring all variables in a neural network under control of the network itself~\citep{schmidhuber1993self}.
This is referred to as a self-referential neural architecture.
Compare this to a conventional neural network where there is a subset of variables (called the weights or parameters) that are only updated by a fixed learning algorithm (such as backpropagation).
This entails that part of the (meta-)learning behavior is fixed and needs to be defined by the researcher.

In contrast, self-referential architectures control all variables.
This includes activations (conventionally updated by the neural network itself), weights (conventionally updated by a learning algorithm), meta weights, etc.
In this section, we discuss necessary conditions on such a self-referential architecture and possible implementations.

\paragraph{Notation}
In the remainder of this paper, we denote external inputs to the neural network as $x \in \R^{N_x}$ (such as observations and rewards in Reinforcement Learning, or error signals in supervised learning), outputs as $y \in \R^{N_y}$ (e.g. actions in Reinforcement Learning), and the parameters of the neural network as $\theta$.
Further, we denote time-varying variables (memory) as $h \in \R^{N_h}$ (such as the hidden state of an RNN).
We summarize all variables in a neural network as $\phi = \{\theta, h, y\}$.

\paragraph{Necessary Conditions}
A self-referential architecture $\phi \leftarrow g_\phi(x)$ is described by a connected computational graph for the function $g$ that 
has variables $\phi = \{\theta, h, y\}$ (one node per scalar).
The network controls all of its variables in the sense that any element(s) of $\phi$ can be changed by the network itself.
This blurs the distinction between activations (memory) $h$ and weights $\theta$.
Computational graphs that fulfill this definition are required to have a certain structure.
At least some variables need to be reused in multiple operations (node out-degree $> 1$).
We refer to this as variable sharing, %
a generalization of weight sharing~\citep{fukushima1979neural} extending beyond classical weights.

Consider a square dense weight matrix.
It consists of $N^2$ weights and $N$ activations.
While the activations are time-varying, the weights are source nodes in the computational graph and cannot be directly updated by actions of the network itself.
To change that, we need to derive $N^2$ variables from $N$ time-varying variables.
This can only be done by reusing some of the same $N$ time-varying variables in multiple operations, generating the $N^2$ variables.

\begin{observation}
Variable sharing in self-referential systems.
Assuming a connected computational graph, an architecture that updates all its variables $\phi \in \R^{N_\phi}$ in iteration $t$ needs to reuse elements of $\phi$ multiple times in the computational graph to generate $\phi_{t+1} \in \R^{N_\phi}$ from $\phi_t \in \R^{N_\phi}$.
\end{observation}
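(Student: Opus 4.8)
The plan is to argue by contradiction: suppose the computational graph of $g$ is connected, that all $N_\phi$ scalar variables in $\phi$ are genuinely updated in iteration $t$, and yet that no element of $\phi$ is reused, i.e. every node has out-degree at most $1$; I will derive that the graph cannot then be connected once $N_\phi \ge 2$. (For $N_\phi = 1$ the statement is vacuous.) The structural fact I will lean on is that a directed acyclic graph in which every out-degree is at most $1$ is an \emph{in-forest}: following the unique out-edge from any node reaches a unique sink, nodes funnelling to distinct sinks form disjoint sets, and along any undirected edge the associated sink is constant; hence the number of weakly connected components equals the number of sinks, and a connected such graph has exactly one sink. Since one application of $g$ is acyclic (the recurrence is only between iterations), unrolling a single step puts us in exactly this setting, with the external inputs $x$ and the components of $\phi_t$ as the sources.

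The second ingredient is to show that ``updating all variables'' is incompatible with a single-sink graph. Following the paper's own position that in a self-referential architecture the activation/weight distinction collapses, every non-source node of the graph is an element of $\phi$ that must be assigned a value, and each such value is read again at the next iteration, so the $N_\phi \ge 2$ components of $\phi_{t+1}$ cannot all be internal and must include at least two genuine sinks, contradicting the in-forest conclusion. Equivalently, and more robustly, one can run a pure degree count: if each updated variable is produced by an operation combining at least two operands (as every weighted sum in a neural network does, which is exactly the regime of the square-matrix remark, where $N^2 + N$ values must be regenerated from the $N$ live ones), then the edges into variable nodes number at least $2N_\phi$, while ``no reuse'' caps the total out-degree, hence the edge count, at the number of nodes, which is $N_\phi$ plus the (much smaller) input dimension; for $N_\phi$ larger than $\dim x$ this is a contradiction, forcing some element of $\phi$ to have out-degree $>1$.

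I expect the real difficulty to be definitional rather than combinatorial: fixing a formalization of ``connected computational graph'' and ``updates all its variables'' under which the count actually bites while the genuine degenerate escapes are excluded. Three points need care: (i) all intermediate computed values must count as variables, as the paper intends, since otherwise a single unary node could fan its output out to many variables and evade the claim; (ii) an ``update'' cannot be a bare relabelling — a pure directed cycle $\phi^{(1)} \to \cdots \to \phi^{(N_\phi)} \to \phi^{(1)}$ is connected, updates every variable, and reuses nothing — so one must insist each new value be produced by an operation that truly combines at least two inputs; and (iii) the external input $x$ must be accounted for, which is harmless precisely when $\dim x < N_\phi$. With these settled, the argument collapses to the one-line in-forest (or degree) count above.
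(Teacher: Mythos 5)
Your proposal is correct, and its primary argument takes a genuinely different route from the paper; your ``more robust'' fallback is essentially the paper's own proof. The paper argues in one line by pigeonhole: since $\phi_{t+1}$ has no more elements than $\phi_t$, any generating operation that consumes more than one element of $\phi_t$ forces some element to be consumed by two operations. Note that this count is conditional --- the paper never derives the existence of a multi-input operation, and it ignores the external inputs $x$ entirely. Your in-forest argument supplies exactly that missing step by actually invoking the connectivity hypothesis: an out-degree-$\le 1$ DAG has one sink per weakly connected component, so a connected single-step unroll cannot emit $N_\phi \ge 2$ outputs unless some node fans out (and a component of $\phi_{t+1}$ that is not a sink is already reused, since it feeds both another variable and the next iteration). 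What your route buys is a proof that genuinely needs the connectedness assumption and that surfaces the degenerate escapes the statement admits when read literally --- pure permutations/relabellings of $\phi$, fan-out concentrated entirely on $x$ when $\dim x \ge N_\phi$, and unary operations --- which the paper's one-liner silently excludes by restricting attention to the interesting case of a weighted sum combining several live values. What the paper's count buys is brevity and independence from any graph-theoretic machinery. Your three definitional caveats are accurate and worth keeping: either the single-step-unroll convention for ``connected'' or the requirement that each update combine at least two operands is needed to rule out the cyclic-shift counterexample, and the condition $\dim x < N_\phi$ is needed so that the forced reuse lands on an element of $\phi$ rather than on an input.
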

\begin{proof}
As there are no more elements in $\phi_t$ than there are in $\phi_{t+1}$, any operation generating an element in $\phi_{t+1}$ that makes use of more than one element in $\phi_t$ needs to reuse an element already in use by a different operation.
\end{proof}

\paragraph{Implementations}
Under the previous constraints, various implementations for self-referential neural architectures are conceivable.
\citet{schmidhuber1993self} assigns an address to each weight such that the network outputs can be used to attend to weights and both read and write their values.
Instead of updating one weight at a time, the fast weight programmers (FWPs) of 1992-93~\citep{Schmidhuber:92ncfastweights,Schmidhuber:93ratioicann} are networks that learn to generate key and value patterns to rapidly change many fast weights simultaneously (although not all FWPs are fully self-referential).
Outer products between activations (a type of variable sharing) are used to derive $M * N$ variables, $M,N \in \N$, from $M + N$ variables.
This allows for updating all the weights of a neural network layer by its own activations~\citep{irie2021a}.
Alternatively, a coordinate-wise mechanism may generate all updates continuously as a function of the weight address~\citep{dambrosio2007novel}.
Other works have used multiple RNNs with shared weights and messaging passing between those to increase the number of time-varying variables $h$ arbitrarily while keeping the number of parameters $\theta$ constant~\citep{rosa2019badger,kirsch2020meta}.
Such systems can be made self-referential by using a subset of $h$ to update parameters $\theta$.

\subsection{Expressivity of Memory and Self-referential Architectures}

In this section, we show that self-referential architectures do not have a fundamental representational advantage over memory-based architectures (in-context learners) when the free (initial) variables are meta-optimized using a human engineered learning algorithm.

We defined self-referential architectures $\phi \leftarrow g_\phi(x)$ as those that can update all their variables $\phi$ in the computational graph.
For notational purposes, we can express $y$ as the explicit output $\phi, y \leftarrow g_\phi(x)$.
Compare this to a memory architecture such as a recurrent neural network $h, y \leftarrow f_\theta(h, x)$ parameterized by $\theta$ where $h$ corresponds to its hidden state (memory).
Can the self-referential architecture represent any functions that the memory architecture can not?
A commonly used intuition~\citep{schmidhuber1993self} is that self-referential architectures are self-modifying, in that they change their own weights, affecting not only their outputs and current weights but also future weight changes through $g_\phi$.
These architectures can thus not only learn, but also meta-learn, meta-meta-learn, etc.
While memory architectures do not update their weights, they can also be self-modifying.
Changes in memory $h$ affect the output directly, but also the effective function $f_\theta$ by modifying its input $h$, in turn determining future changes to $h$.
More generally, we can use in-context learners (memory-based architectures) to simulate or approximate self-referential systems:

\begin{observation}
For any self-referential architecture $\phi, y \leftarrow g_\phi(x)$ and some initial $\phi_0$ we can find a memory architecture $h, y \leftarrow f_\theta(h, x)$, $\theta$, and initial $h_0$ such that for any sequence of $x_{1:T}$ we have $\hat f(x_{1:T}) = \hat g(x_{1:T})$ where $\hat f$ and $\hat g$ are the unrolls returning $y_{1:T}$ of $f$ and $g$ respectively.
\end{observation}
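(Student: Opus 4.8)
The plan is to give a direct simulation argument: the memory architecture will store a copy of the self-referential system's full variable vector $\phi$ inside its own hidden state and run $g$ step by step. Concretely, I would set $\theta$ to encode the fixed map $g$ (which is itself a fixed computational graph, hence expressible with a fixed finite parameter vector), let the hidden state be $h := \phi$ with $N_h = N_\phi$, and choose $h_0 := \phi_0$. Then I define $f_\theta(h, x)$ to compute one application of $g_h(x)$, returning the updated variable vector as the new memory and the designated output coordinates as $y$. With this construction, an induction on $t$ shows that after processing $x_{1:t}$ the memory architecture's hidden state equals $\phi_t$ and its emitted output equals the $y_t$ produced by $g$, so the two unrolls $\hat f(x_{1:T})$ and $\hat g(x_{1:T})$ coincide for every input sequence.

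The key steps, in order, are: (i) observe that $g$ is a fixed function — the ``self-reference'' lies in the fact that the roles of its arguments change over time, not in the map itself — so $g$ can be baked into static parameters $\theta$; (ii) identify the recurrent state $h$ with the entire variable bundle $\phi = \{\theta_{\mathrm{inner}}, h_{\mathrm{inner}}, y\}$ of the self-referential system, which is legitimate because a memory architecture places no a priori cap on $N_h$; (iii) define $f_\theta(h,x) := g_h(x)$, reading the output coordinates off the result; (iv) set $h_0 = \phi_0$; (v) prove by induction on $t \in \{1,\dots,T\}$ that the hidden state after step $t$ equals $\phi_t$ and the output at step $t$ equals $y_t$, using the inductive hypothesis to align the two dynamics at step $t+1$. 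Since $T$ is arbitrary, equality of the unrolls on all $x_{1:T}$ follows.

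I expect the main obstacle to be conceptual rather than technical: making precise what a ``memory architecture'' is allowed to do. If $f_\theta$ is required to be, say, a bounded-width feedforward or standard-RNN cell, then literal equality may fail and the statement would have to be read up to approximation (which is why the paper's own phrasing hedges with ``simulate or approximate''); here, though, the cleanest reading is that $f_\theta$ ranges over the same class of computational graphs as $g$, in which case the simulation is exact and the induction goes through with no estimates. A secondary subtlety is the bookkeeping for which coordinates of $\phi$ play the role of $y$: one must fix, once and for all, a projection $\pi$ with $y_t = \pi(\phi_t)$ and check that $f$ uses the same $\pi$, so that the ``explicit output'' conventions for $g$ and $f$ match. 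Neither of these points requires computation; they just need to be stated carefully before the induction.
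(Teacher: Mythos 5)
Your proposal is correct and follows essentially the same route as the paper's proof: construct an emulator $f_\theta$ that stores $\phi$ in its hidden state $h$, sets $h_0 = \phi_0$, and performs one step of $g$ per recurrence, so the two unrolls coincide by induction. Your added care about the output projection and about whether $f_\theta$ ranges over the same class of computational graphs as $g$ (exact simulation vs.\ approximation by a ``sufficiently large'' network) is a reasonable sharpening of the same argument, not a different one.
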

\begin{proof}
We construct an emulator $f_\theta$ (a sufficiently large neural network parameterized by $\theta$) that stores $\phi$ in $h$ and sets $\theta, h_0$ such that at each step $t \in \N$ it performs the same computation as $g_\phi$ by taking $\phi_t$ from $h_t$, computing $\phi_{t+1}$, and storing it in $h_{t+1}$.
\end{proof}

\begin{figure}
    \centering
    \begin{overpic}[width=\linewidth]{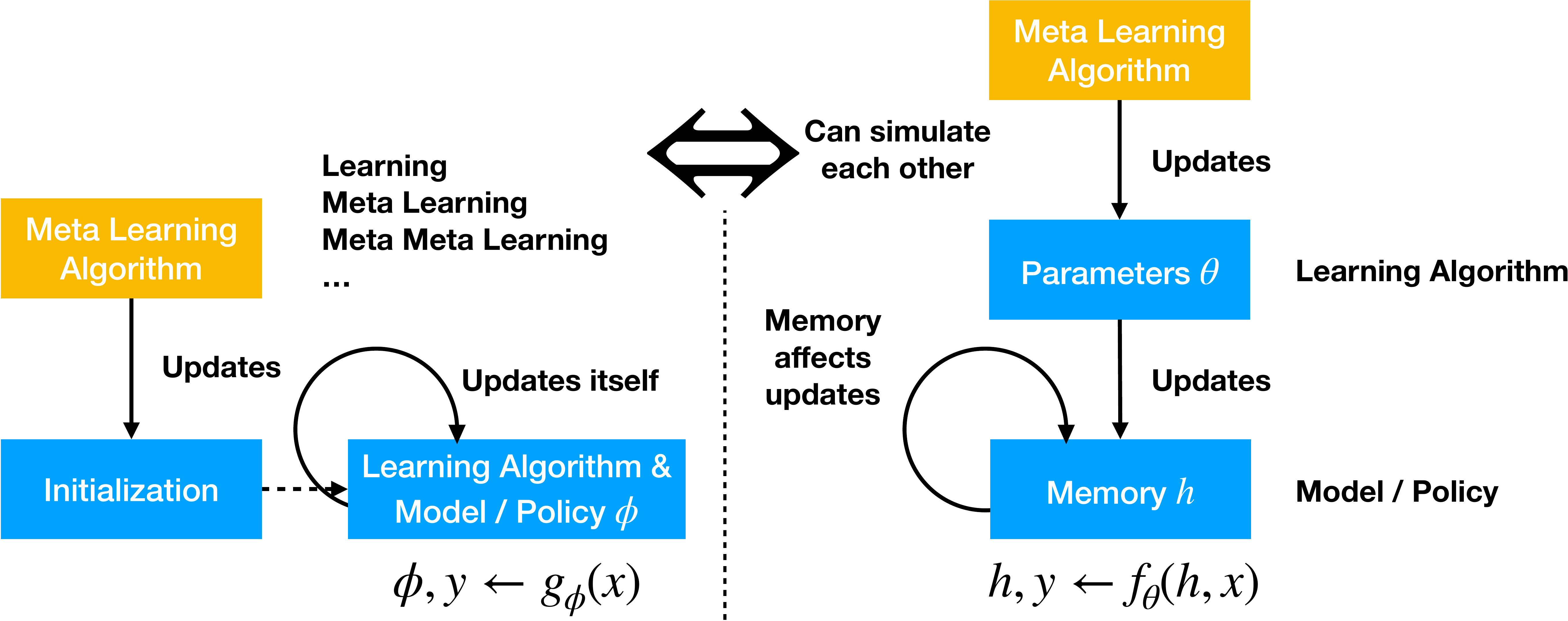}
        \put (0,35) {\textbf{\small(a)}}
        \put (47,35) {\textbf{\small(b)}}
    \end{overpic}
    \caption{
    \textbf{Self-referential (a) and memory-based (b) architectures (in-context learners) can represent the same function class and both require meta optimization.}
    Self-referential networks can directly update all of their weights, whereas memory-based networks can self-modify through memory updates.
    The former require meta optimization of the weight initialization $\phi_0$, the latter require the initial memory $h_0$ and weights $\theta$.
    }
    \label{fig:architectures}
\end{figure}

Furthermore, any memory-based architecture can be represented by a self-referential architecture where a subset of variables is updated by the identity function.
In conclusion, the function class that can be represented by self-referential architectures is equivalent to memory architectures, given sufficiently rich parameterizations.
This is visualized in \autoref{fig:architectures}.
For both memory architectures and self-referential architectures the same question arises:
How do we set the free variables $\theta$, $h_0$, or $\phi_0$?
If these free variables are directly optimized (eg by following the gradient of some objective), from a representability perspective there is no advantage in self-referential architectures.
Orthogonal to this, the chosen architecture (whether memory or self-referential) may have varying optimizability or regularizing benefits due to e.g. sparsity in the computational graph, multiplicative interactions, or variable sharing.
In the following, we discuss how self-referential architectures are relevant in the absence of direct meta optimization.%

\section{Method: Fitness Monotonic Execution}

Both in the case of self-referential and memory architectures the free (initial) variables need to be found.
Here we propose a method, \textit{\approach} (FME), that avoids explicit meta optimization of these free variables.
Instead of modifying $\phi$ directly using a human-engineered learning algorithm, we simply select between different configurations of $\phi$ that are generated using self-modifications.
In particular, through interactions with the environment we continuously add new solutions to a set of $\Phi = \{\phi_i\}$.
Computation time is distributed across solutions monotonic in their performance, i.e. better performing solutions are executed longer (or are selected for execution more frequently).
This can be formalized as a pmf $p(\phi)$ that assigns each solution  $\phi \in \Phi$ a probability for being executed at any given time-step based on its average reward $\frac{R(\phi)}{\Delta t}$ relative to other solutions (where $\Delta t$ is the solution's total lifetime).
This can be further normalized by the frequency of solutions (e.g. the performance of solutions determines the probability, not their identity) to prevent many bad performing solutions to dominate over few good ones.
As a special case, $p$ may put all probability mass on the current best solution, greedily selecting for improvement.
See \autoref{alg:fitness_monotonic_execution} and \autoref{fig:fme_vis} for a full description.

Note that in this scheme, memory architectures are now inadequate.
If there were any variables that are not modifiable, their value could not be determined through self-modifications.
As we do not use any human-engineered optimization process, their value would be undefined.
Thus, self-referential architectures are required.\footnote{Although, one might set these non-modifiable variables to some fixed (random) value that still allows the resulting network to self-modify sufficiently to implement any desirable policy, learner, meta-learner, etc. We do not further explore this here.}

\begin{figure}
    \centering
    \includegraphics[width=0.8\columnwidth]{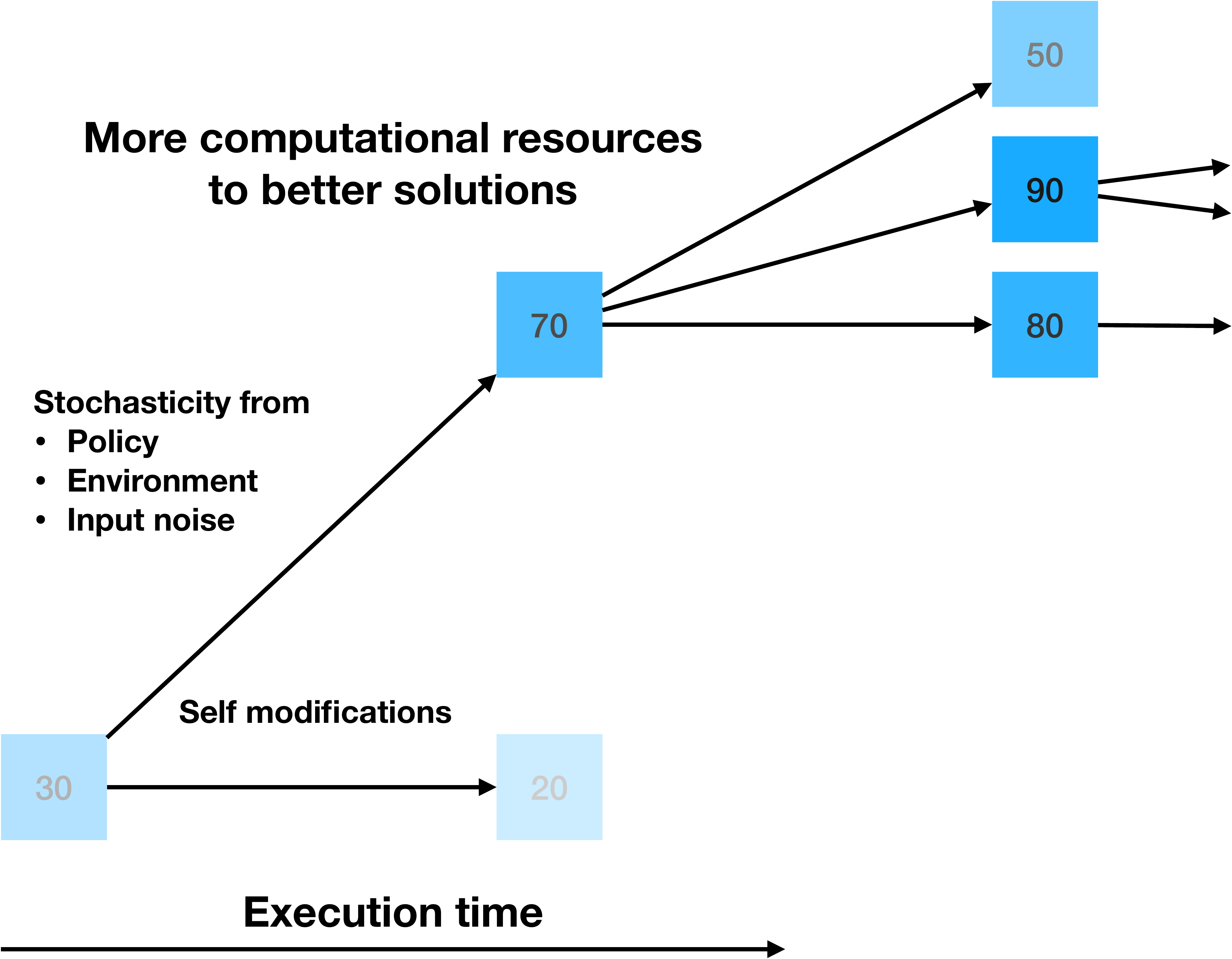}
    \caption{
    \textbf{In fitness monotonic execution (FME) the self-referential system self-modifies from a random initialization. Better performing solutions are executed longer (or selected more frequently).}
    In effect, solutions self-modify their behavior, learning, meta learning, meta meta learning, etc without a fixed scheme for meta optimization prescribing how parameters are updated.
    }
    \label{fig:fme_vis}
\end{figure}

\begin{algorithm}
    \centering	
    \begin{algorithmic}	
        \Require Initial solution(s) $\Phi = \{\phi_i\}$, self-referential architecture $g_\phi$, probability $p(\phi)$, an RL environment $E$
        \While{forever}
            \State $\phi \sim p(\phi)$ where $\phi \in \Phi$ \Comment{\parbox[t]{.45\linewidth}{Sample next solution to execute, monotonic in its performance}}
            \State $\phi, y_{1:L} \leftarrow g^L_\phi(x_{1:L})$ \Comment{\parbox[t]{.51\linewidth}{Execute $g$ for $L$ steps with $x_{1:L}$ from the environment $E$ including a feedback signal}}
            \State $\Phi \leftarrow \Phi \cup \{\phi\}$ \Comment{Add new $\phi$ to $\Phi$}
        \EndWhile
    \end{algorithmic}	
    \caption{Fitness monotonic execution}\label{alg:fitness_monotonic_execution}
\end{algorithm}

\paragraph{Least-recently-used Buffer}
To limit the number of solutions we need to store, we implement {\approach} with a least-recently-used (LRU) buffer.
It consists of $m$ buckets where each bucket holds recent solutions in a specific performance range.
Solutions from buckets with higher performance are sampled exponentially more frequently.

\paragraph{Outer-product-based Architecture}
For the self-referential neural network architecture, we chose an outer-product mechanism adapted from prior work~\citep{irie2021a}.
By applying a weight matrix $W_{t-1} \in \R^{N_x \times (N_y + 2N_x + 4)}$ to some input $x_t \in \R^{N_x}$ we generate the output $y_t \in \R^{N_y}$, key $k_t \in \R^{N_x}$, query $q_t \in \R^{N_x}$, and a learning rate $\beta_t \in \R^4$.
Using an outer-product, the key and query generate an update to the weight matrix $W_{t-1}$, obtaining $W_t$:
\begin{align}
    y_t, k_t, q_t, \beta_t &= W_{t-1}\psi(x_t) \\
    \bar v_t &= W_{t-1}\psi(k_t) \\
    v_t &= W_{t-1}\psi(q_t) \\
    W_t &= W_{t-1} + \sigma(\beta_t)(\psi(v_t) - \psi(\bar v_t)) \otimes \psi(k_t)
\end{align}
where $\psi$ is the tanh activation, $\sigma$ is the sigmoid function, and $\otimes$ is the outer product.
The learning rate $\beta_t \in \R^4$ controls the rate of update to the four parts generating $y,k,q,\beta$.
We stack multiple such self-referential layers.

\section{Experiments}
We empirically investigate several questions:
Firstly, starting with a randomly initialized solution, can the network modify itself to solve a bandit task?
We compare this to a hill climbing strategy.
Secondly, how do self-modifications compare when solving a markov decision process?
Thirdly, given a bandit task that is non-stationary, can the network learn to modify itself based on the reward it receives as input?
Refer to \autoref{app:exp_details} for implementation details.

\paragraph{Learning a Bandit Policy}
The first question we investigate is whether a randomly initialized self-referential architecture is capable of making self-modifications that lead to a useful policy for a given task.
We test this on a simple 2-armed bandit where one arm gives payouts (rewards) of $1$ and the other $0$.
From \autoref{fig:bandit} (left) we observe that after around $40$ self-modifications and selections a solution is found that always selects the arm with a higher payoff.
We compare this to hill climbing with a variance-tuned Gaussian noise on the network parameters.
Hill-climbing is a natural baseline, as it proposes new solutions using a fixed scheme (instead of self-modifications) and then chooses the best solution.
Thus, it validates the usefulness of self-modifications.
To match this baseline closely to FME, we simply replace self-modifications with fixed variance-tuned Gaussian noise.
We observe that in this simple environment, {\approach} is as effective as hill climbing to find an optimal solution to this bandit problem.

\begin{figure}
    \centering
    \includegraphics[width=0.49\columnwidth]{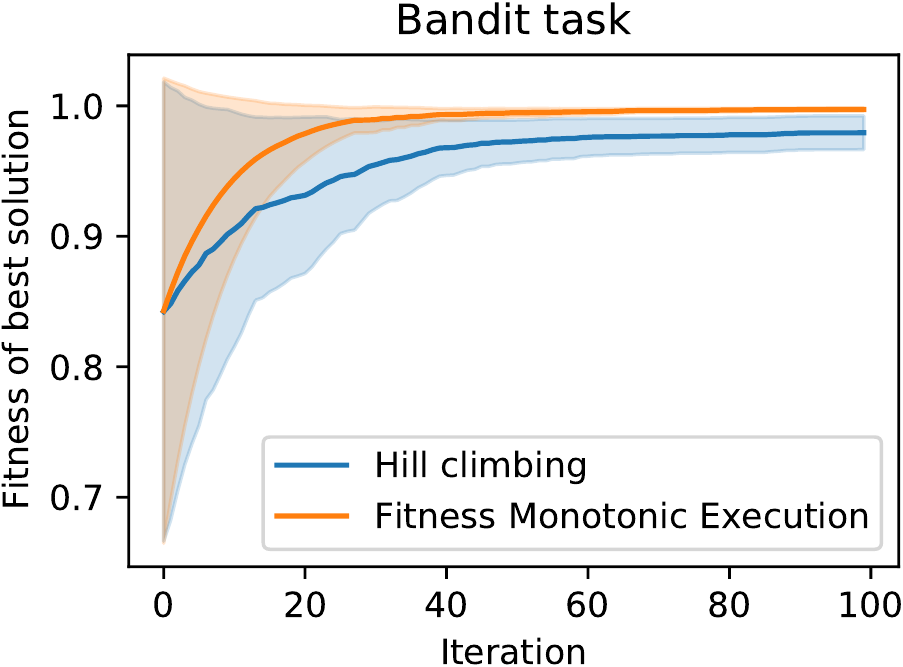}
    \includegraphics[width=0.49\columnwidth]{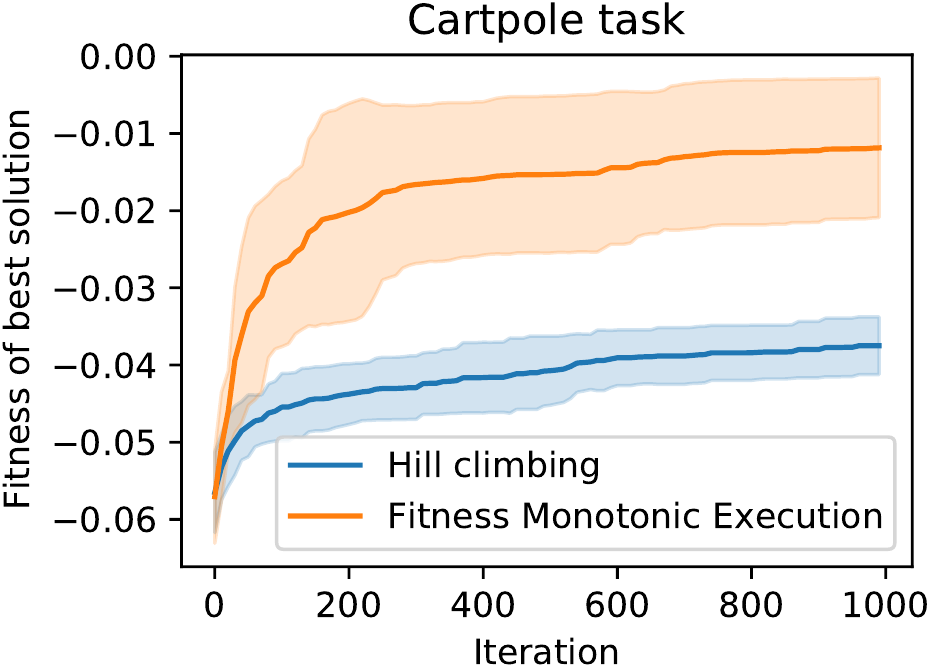}
    \caption{A randomly initialized self-referential architecture makes modifications to itself to solve a two-armed bandit problem (left).
    On a Cartpole task (right) the self-modifications not only directly improve the policy, but also improve future improvements, resulting in faster learning compared to hill climbing.
    The found policy balances the pole for about 100 steps.
    Standard deviations are shown for 5 seeds.}
    \label{fig:bandit}
\end{figure}

\paragraph{Cartpole}
Next, we increase the difficulty of the policy to be found by running a self-referential network on the Cartpole task (\autoref{fig:bandit}, right).
We observe that reaching a good performing policy takes significantly more self-modifications and selections.
At the same time, a simple hill climbing strategy (with tuned noise) fails at improving the policy at the same rate as the self-modifying architecture.
This suggests that we are not only selecting for good policies but also strategies for self-modification that lead to policy improvement in the future.

\begin{figure}
    \centering
    \includegraphics[width=0.7\columnwidth]{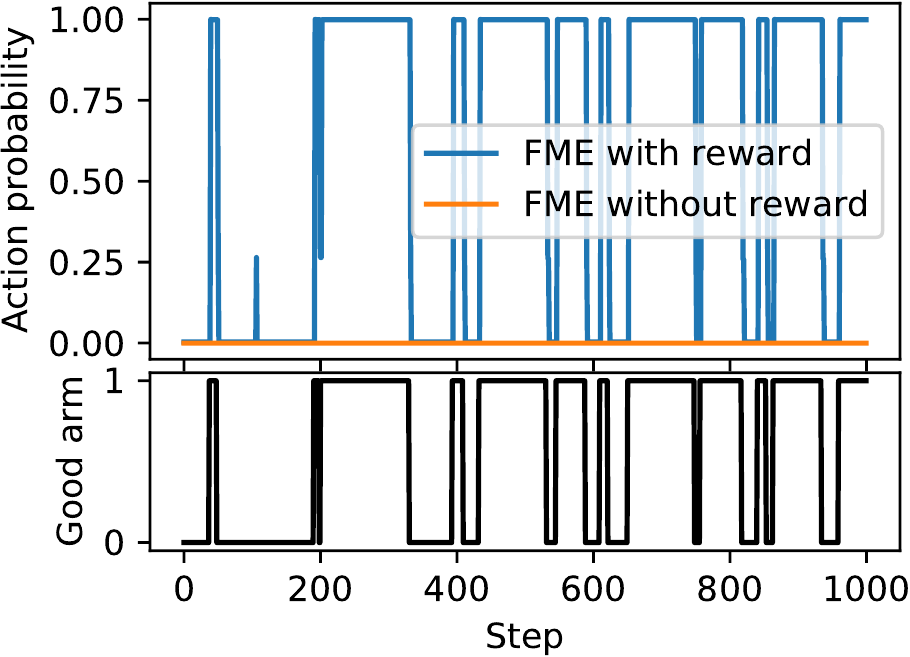}
    \caption{Given the reward as input, self-modifications enable adaptation to swapping of the good arm in a two-armed bandit.}
    \label{fig:bandit_meta}
\end{figure}

\paragraph{Meta Learning a Bandit Learning Algorithm}
Given a non-stationary task, a good policy can not exhibit a fixed behavior but must adapt to changing rewards (learn).
We test the capabilities of {\approach} to adapt to a changing bandit task.
In \autoref{fig:bandit_meta} we swap the good and bad arm at random intervals.
We further feed the reward as an input to the policy such that it can adapt its behavior based on the reward.
We observe that {\approach} leads to self-modifying policies that change their action (learn) in response to the reward they previously received.
In contrast, if this reward is not fed as an input, the policy fails to adapt.

\section{Related Work}

\paragraph{In-context Learning and Fast Weight Programmers}
In-context learning~\citep{hochreiter2001learning,wang2016learning,duan2016rl,brown2020language} (or memory-based meta learning) describes neural networks that learn-to-learn purely within their activations.
This is enabled through the inclusion of a feedback signal or demonstration in the network inputs~\citep{schmidhuber1993self}.
Previous work has demonstrated the capabilities of neural networks to encode human-engineered learning algorithms such as backpropagation~\citep{linnainmaa:1970} purely in the forward-pass of neural networks or to discover more efficient learning algorithms from scratch~\citep{kirsch2020meta}.
Closely related to in-context learning are fast weight programmers (FWPs) that learn to update weights explicitly~\citep{Schmidhuber:92ncfastweights,Schmidhuber:93ratioicann,miconi2018differentiable,schlag2020learning}.
The principles that connect in-context learning with learned weight updates are parameter sharing and multiplicative interactions~\citep{kirsch2020meta,kirsch2021introducing}.
Almost all in-context learners to date are not fully self-referential and require (meta-)parameters to be explicitly meta-optimized by known learning algorithms.

\paragraph{Self-referential Meta Learning}
Self-referential meta learning makes all variables (weights and activations) time-varying and modifiable to the neural network itself~\citep{schmidhuber1993self,irie2021a,flennerhag2021bootstrapped}.
As we have discussed in this paper, both self-referential neural networks as well as in-context learners can in principle implement learning, meta learning, meta meta learning, etc in their neural network dynamics.
This still requires explicit meta optimization of some (initial) parameters.
In contrast, we introduced fitness monotonic execution (FME) to allow self-modifications to govern (meta-)learning without explicit meta optimization.
A related algorithm to this is the success story algorithm (SSA)~\citep{schmidhuber1997shifting}.
In this algorithm, regular checkpoints of the learner are created and self-modifications are reverted when the average reward decreases.
Different from our approach, this process is sequential and keeps only a single history of self-modifications.
This potentially limits parallelizability on modern hardware and results in slower exploration.

\paragraph{Portfolio Algorithms}
Portfolio algorithms~\citep{huberman1997economics} are algorithmically related to FME, although view the selection of the algorithm/solution as the ‘meta learning’ problem~\citep{gagliolo2011algorithm}.
This is different from FME where the meta learning can happen within the algorithms themselves and the aim is to put minimal assumptions/biases into the execution of solutions (portfolio schedule).

\section{Discussion}

\paragraph{Limitations}
This paper represents an initial discussion of self-referential systems that do not rely on fixed human-engineered meta optimization.
The empirical evaluation is still minimalistic at this time but should be a good starting point for larger experiments and improvements.
Our intention is to incite further interest in this research direction.
Semantically, it is an open question whether FME should be called a (minimalistic) meta optimizer after all.
Usually, optimizers define how solutions are modified based on a feedback signal.
In the case of FME, these modifications are self-generated.

\paragraph{Broader Impact}
This paper is not directly concerned with applications of machine learning.
Nevertheless, meta learning methods may pose additional challenges in the future.
For instance, learning algorithms are now also subject to learning which means that learning itself becomes more difficult to interpret and to monitor for biases extracted from data.

\paragraph{Self-modifying Architecture}
We described a meta learner that can self-modify all its variables including those that define the self-modifications, but its architecture is still hard-coded.
While many architectures are computationally universal~\citep{siegelmann1991turing}, modifications may still be useful for efficiency reasons~\citep{miller1989designing,elsken2019neural}.
In fitness monotonic execution, the self-modifications do not require differentiability.
Thus, self-modifications can be extended to include architecture modifications $\phi, y, g \leftarrow g_\phi(x)$, such as adding or removing neurons and weights, changing operations, and (un-)sharing variables.

\paragraph{Communication between Solutions/Agents}
Solutions are executed entirely isolated in the present experiments.
Through fitness monotonic execution, they compete for computational time.
Related to approaches simulating artificial life~\citep{langton1997artificial}, this may be extended to collaboration and other interactions through the addition of communication channels between solutions (agents).

\section{Conclusion}
In this paper, we discussed self-referential systems that exhibit self-improvement while reducing the reliance on human engineering to the largest extent possible.
In particular this means avoiding the use of human engineered learning algorithms on the meta level.
We showed that in order to construct systems that can change all their parameters (or variables more generally), functionality needs to be reused.
This is done in the form of parameter (variable) sharing.
We further demonstrated the representational equivalence between neural networks in-context learning with memory and self-referential architectures while highlighting the benefit of self-referential architectures in the absence of meta optimization.
Finally, we proposed {\approach} (FME), a simple approach to avoid explicit meta optimization.
A neural network self-modifies to solve bandit and classic control tasks, improves its self-modifications, and learns how to learn, purely by assigning more computational resources to better performing solutions.

\section*{Acknowledgements}
This work was supported by the ERC Advanced Grant (no: 742870) and computational resources by the Swiss National Supercomputing Centre (CSCS, projects s978, s1041, and s1127).
We also thank NVIDIA Corporation for donating several DGX machines as part of the Pioneers of AI Research Award, IBM for lending a Minsky machine, and weights \& biases~\citep{biewald2020} for their great experiment tracking software and support.

\bibliography{main}

\begin{thebibliography}{29}
\providecommand{\natexlab}[1]{#1}
\providecommand{\url}[1]{\texttt{#1}}
\expandafter\ifx\csname urlstyle\endcsname\relax
  \providecommand{\doi}[1]{doi: #1}\else
  \providecommand{\doi}{doi: \begingroup \urlstyle{rm}\Url}\fi

\bibitem[Biewald(2020)]{biewald2020}
Biewald, L.
\newblock {Experiment Tracking with Weights and Biases}, 2020.
\newblock URL \url{https://www.wandb.com/}.

\bibitem[Brown et~al.(2020)Brown, Mann, Ryder, Subbiah, Kaplan, Dhariwal,
  Neelakantan, Shyam, Sastry, Askell, et~al.]{brown2020language}
Brown, T.~B., Mann, B., Ryder, N., Subbiah, M., Kaplan, J., Dhariwal, P.,
  Neelakantan, A., Shyam, P., Sastry, G., Askell, A., et~al.
\newblock Language models are few-shot learners.
\newblock \emph{arXiv preprint arXiv:2005.14165}, 2020.

\bibitem[D'Ambrosio \& Stanley(2007)D'Ambrosio and Stanley]{dambrosio2007novel}
D'Ambrosio, D.~B. and Stanley, K.~O.
\newblock A novel generative encoding for exploiting neural network sensor and
  output geometry.
\newblock In \emph{Proceedings of the 9th annual conference on Genetic and
  evolutionary computation}, pp.\  974--981, 2007.

\bibitem[Duan et~al.(2016)Duan, Schulman, Chen, Bartlett, Sutskever, and
  Abbeel]{duan2016rl}
Duan, Y., Schulman, J., Chen, X., Bartlett, P.~L., Sutskever, I., and Abbeel,
  P.
\newblock Rl$^2$: Fast reinforcement learning via slow reinforcement learning.
\newblock \emph{arXiv preprint arXiv:1611.02779}, 2016.

\bibitem[Elsken et~al.(2019)Elsken, Metzen, and Hutter]{elsken2019neural}
Elsken, T., Metzen, J.~H., and Hutter, F.
\newblock Neural architecture search: A survey.
\newblock \emph{The Journal of Machine Learning Research}, 20\penalty0
  (1):\penalty0 1997--2017, 2019.

\bibitem[Finn et~al.(2017)Finn, Abbeel, and Levine]{finn2017model}
Finn, C., Abbeel, P., and Levine, S.
\newblock Model-agnostic meta-learning for fast adaptation of deep networks.
\newblock In \emph{International Conference on Machine Learning}, pp.\
  1126--1135. PMLR, 2017.

\bibitem[Flennerhag et~al.(2019)Flennerhag, Rusu, Pascanu, Visin, Yin, and
  Hadsell]{flennerhag2019meta}
Flennerhag, S., Rusu, A.~A., Pascanu, R., Visin, F., Yin, H., and Hadsell, R.
\newblock Meta-learning with warped gradient descent.
\newblock \emph{arXiv preprint arXiv:1909.00025}, 2019.

\bibitem[Flennerhag et~al.(2021)Flennerhag, Schroecker, Zahavy, van Hasselt,
  Silver, and Singh]{flennerhag2021bootstrapped}
Flennerhag, S., Schroecker, Y., Zahavy, T., van Hasselt, H., Silver, D., and
  Singh, S.
\newblock Bootstrapped meta-learning.
\newblock \emph{arXiv preprint arXiv:2109.04504}, 2021.

\bibitem[Fukushima(1979)]{fukushima1979neural}
Fukushima, K.
\newblock Neural network model for a mechanism of pattern recognition
  unaffected by shift in position-neocognitron.
\newblock \emph{IEICE Technical Report, A}, 62\penalty0 (10):\penalty0
  658--665, 1979.

\bibitem[Gagliolo \& Schmidhuber(2011)Gagliolo and
  Schmidhuber]{gagliolo2011algorithm}
Gagliolo, M. and Schmidhuber, J.
\newblock Algorithm portfolio selection as a bandit problem with unbounded
  losses.
\newblock \emph{Annals of Mathematics and Artificial Intelligence}, 61\penalty0
  (2):\penalty0 49--86, 2011.

\bibitem[Hochreiter et~al.(2001)Hochreiter, Younger, and
  Conwell]{hochreiter2001learning}
Hochreiter, S., Younger, A.~S., and Conwell, P.~R.
\newblock Learning to learn using gradient descent.
\newblock In \emph{International Conference on Artificial Neural Networks},
  pp.\  87--94. Springer, 2001.

\bibitem[Huberman et~al.(1997)Huberman, Lukose, and
  Hogg]{huberman1997economics}
Huberman, B.~A., Lukose, R.~M., and Hogg, T.
\newblock An economics approach to hard computational problems.
\newblock \emph{Science}, 275\penalty0 (5296):\penalty0 51--54, 1997.

\bibitem[Irie et~al.(2021)Irie, Schlag, Csord{\'a}s, and
  Schmidhuber]{irie2021a}
Irie, K., Schlag, I., Csord{\'a}s, R., and Schmidhuber, J.
\newblock A modern self-referential weight matrix that learns to modify itself.
\newblock In \emph{Deep RL Workshop NeurIPS 2021}, 2021.

\bibitem[Kirsch \& Schmidhuber(2020)Kirsch and Schmidhuber]{kirsch2020meta}
Kirsch, L. and Schmidhuber, J.
\newblock Meta learning backpropagation and improving it.
\newblock \emph{arXiv preprint arXiv:2012.14905}, 2020.

\bibitem[Kirsch et~al.(2019)Kirsch, van Steenkiste, and
  Schmidhuber]{kirsch2019improving}
Kirsch, L., van Steenkiste, S., and Schmidhuber, J.
\newblock Improving generalization in meta reinforcement learning using learned
  objectives.
\newblock \emph{arXiv preprint arXiv:1910.04098}, 2019.

\bibitem[Kirsch et~al.(2021)Kirsch, Flennerhag, van Hasselt, Friesen, Oh, and
  Chen]{kirsch2021introducing}
Kirsch, L., Flennerhag, S., van Hasselt, H., Friesen, A., Oh, J., and Chen, Y.
\newblock Introducing symmetries to black box meta reinforcement learning.
\newblock \emph{arXiv preprint arXiv:2109.10781}, 2021.

\bibitem[Langton(1997)]{langton1997artificial}
Langton, C.~G.
\newblock Artificial life: An overview.
\newblock 1997.

\bibitem[Linnainmaa(1970)]{linnainmaa:1970}
Linnainmaa, S.
\newblock \emph{{The representation of the cumulative rounding error of an
  algorithm as a Taylor expansion of the local rounding errors}}.
\newblock PhD thesis, Univ. Helsinki, 1970.

\bibitem[Miconi et~al.(2018)Miconi, Stanley, and
  Clune]{miconi2018differentiable}
Miconi, T., Stanley, K., and Clune, J.
\newblock Differentiable plasticity: training plastic neural networks with
  backpropagation.
\newblock In \emph{International Conference on Machine Learning}, pp.\
  3559--3568. PMLR, 2018.

\bibitem[Miller et~al.(1989)Miller, Todd, and Hegde]{miller1989designing}
Miller, G.~F., Todd, P.~M., and Hegde, S.~U.
\newblock Designing neural networks using genetic algorithms.
\newblock In \emph{ICGA}, volume~89, pp.\  379--384, 1989.

\bibitem[Rosa et~al.(2019)Rosa, Afanasjeva, Andersson, Davidson, Guttenberg,
  Hlubu{\v{c}}ek, Poliak, V{\'\i}tku, and Feyereisl]{rosa2019badger}
Rosa, M., Afanasjeva, O., Andersson, S., Davidson, J., Guttenberg, N.,
  Hlubu{\v{c}}ek, P., Poliak, M., V{\'\i}tku, J., and Feyereisl, J.
\newblock Badger: Learning to (learn [learning algorithms] through multi-agent
  communication).
\newblock \emph{arXiv preprint arXiv:1912.01513}, 2019.

\bibitem[Schlag et~al.(2020)Schlag, Munkhdalai, and
  Schmidhuber]{schlag2020learning}
Schlag, I., Munkhdalai, T., and Schmidhuber, J.
\newblock Learning associative inference using fast weight memory.
\newblock \emph{arXiv preprint arXiv:2011.07831}, 2020.

\bibitem[Schmidhuber(1987)]{schmidhuber1987evolutionary}
Schmidhuber, J.
\newblock \emph{Evolutionary principles in self-referential learning, or on
  learning how to learn: the meta-meta-... hook}.
\newblock PhD thesis, Technische Universit{\"a}t M{\"u}nchen, 1987.

\bibitem[Schmidhuber(1992)]{Schmidhuber:92ncfastweights}
Schmidhuber, J.
\newblock Learning to control fast-weight memories: An alternative to recurrent
  nets.
\newblock \emph{Neural Computation}, 4\penalty0 (1):\penalty0 131--139, 1992.

\bibitem[Schmidhuber(1993{\natexlab{a}})]{Schmidhuber:93ratioicann}
Schmidhuber, J.
\newblock On decreasing the ratio between learning complexity and number of
  time-varying variables in fully recurrent nets.
\newblock In \emph{Proceedings of the International Conference on Artificial
  Neural Networks, Amsterdam}, pp.\  460--463. Springer, 1993{\natexlab{a}}.

\bibitem[Schmidhuber(1993{\natexlab{b}})]{schmidhuber1993self}
Schmidhuber, J.
\newblock A ‘self-referential’weight matrix.
\newblock In \emph{International conference on artificial neural networks},
  pp.\  446--450. Springer, 1993{\natexlab{b}}.

\bibitem[Schmidhuber et~al.(1997)Schmidhuber, Zhao, and
  Wiering]{schmidhuber1997shifting}
Schmidhuber, J., Zhao, J., and Wiering, M.
\newblock Shifting inductive bias with success-story algorithm, adaptive levin
  search, and incremental self-improvement.
\newblock \emph{Machine Learning}, 28\penalty0 (1):\penalty0 105--130, 1997.

\bibitem[Siegelmann \& Sontag(1991)Siegelmann and Sontag]{siegelmann1991turing}
Siegelmann, H.~T. and Sontag, E.~D.
\newblock Turing computability with neural nets.
\newblock \emph{Applied Mathematics Letters}, 4\penalty0 (6):\penalty0 77--80,
  1991.

\bibitem[Wang et~al.(2016)Wang, Kurth-Nelson, Tirumala, Soyer, Leibo, Munos,
  Blundell, Kumaran, and Botvinick]{wang2016learning}
Wang, J.~X., Kurth-Nelson, Z., Tirumala, D., Soyer, H., Leibo, J.~Z., Munos,
  R., Blundell, C., Kumaran, D., and Botvinick, M.
\newblock Learning to reinforcement learn.
\newblock \emph{arXiv preprint arXiv:1611.05763}, 2016.

\end{thebibliography}
\bibliographystyle{icml2022}

\newpage
\appendix
\onecolumn

\section{Implementation Details}\label{app:exp_details}

\paragraph{Least-recently-used Buffer}
We initialize a least-recently-used (LRU) buffer with a single randomly initialized neural network.
The $m = 100$ buckets evenly cover the entire current performance range and each holds the $100$ most recent solutions.
Solutions from buckets with higher performance are sampled exponentially more frequently.
We use an exponential base of $e^{20}$.
All layers are initialized from a truncated normal with a standard deviation of $\sigma = \frac{1}{\sqrt{N_x}}$.
When selected, a solution is executed for $L=1000$ steps.

\paragraph{Architecture}
We stack three self-referential layers with $32$ hidden units.

\paragraph{Sources of Randomness}
To create a temporal tree of self-modifying solutions, randomness must be injected into the system.
This randomness originates from the policy action sampling, non-deterministic environment steps, and potential external noise injection as an input to the policy.
We found external noise injection not to improve the agent's performance when sufficient randomness originates from the policy and environment.

\end{document}